\newcommand{\R}{\mathbb{R}}
\newcommand{\Id}{\mathrm{Id}}
\newcommand{\CT}{C_T}
\newcommand{\Lam}{\Lambda}
\newcommand{\C}{\mathcal{C}}
\newcommand{\Vect}{\mathbf{Vect}}
\newcommand{\GAP}{\mathrm{GAP}}
\newcommand{\convstar}{\ast}
\title{Learning with Category-Equivariant Architectures for Human Activity Recognition}
\titlerunning{Learning with Category-Equivariant Architectures}
\author{Yoshihiro Maruyama\inst{1,2}}
\authorrunning{Y. Maruyama}
\institute{School of Informatics, Nagoya University, Japan\\
\email{maruyama@i.nagoya-u.ac.jp}
\and
School of Computing, Australian National University, Australia\\
\email{yoshihiro.maruyama@anu.edu.au}
}
\begin{document}
\maketitle

\begin{abstract}
We propose CatEquiv, a category‑equivariant neural network for Human Activity Recognition (HAR) from inertial sensors that systematically encodes temporal, amplitude, and structural symmetries. We introduce a symmetry category that jointly represents cyclic time shifts, positive gain scalings, and the sensor‑hierarchy poset, capturing the categorical symmetry structure of the data. CatEquiv is equivariant with respect to this symmetry category. On UCI‑HAR under out‑of‑distribution perturbations, CatEquiv attains markedly higher robustness compared with circularly padded CNNs  and plain CNNs. These results demonstrate that enforcing categorical symmetries yields strong invariance and generalization without additional model capacity.
\keywords{Category-Equivariant Representation Theory \and UCI HAR Dataset \and Categorical Equivariant Deep Learning \and Category-Equivariant Neural Network \and Categorical Equivariant Representation Learning \and HCI}
\end{abstract}

\section{Introduction}\label{sec:intro}

\paragraph{Motivation.}
Human Activity Recognition (HAR) from smartphone inertial sensors must contend with variability that is \emph{structural}, not merely random noise: windows begin at different phases (temporal shifts), phones are held or worn at arbitrary orientations (3D rotations), sensor gains drift over time (amplitude scaling), and channels are related hierarchically (axes $\rightarrow$ sensor $\rightarrow$ fused signals). Standard CNN/MLP baselines learn coordinate‑specific templates; they perform well in‑distribution but degrade sharply once any of these factors change at test time, a pattern broadly observed in robustness studies on distribution shift~\cite{HendrycksDietterich2019Corruptions}. 

\paragraph{Our approach.}
We target this failure mode on UCI‑HAR~\cite{Anguita2013UCIHAR} using the raw six‑channel inertial streams (accelerometer and gyroscope, three axes each). To emulate realistic deployment, we evaluate under composite out‑of‑distribution (OOD) conditions that simultaneously apply cyclic time shifts (to model phase mismatch), independent random $\mathrm{SO}(3)$ rotations per tri‑axial block (to model device pose), and per‑sensor gain changes (to model calibration drift). These perturbations are not adversarial; they are the natural algebra of how signals vary in practice. A learning principle that builds these symmetries into the representation, rather than fighting them with ad‑hoc augmentation, should therefore confer robustness by construction.

\paragraph{Equivariant learning.}
The geometric deep learning view emphasizes equivariance to symmetry groups as a principled route to generalization~\cite{Bronstein2021GDL}.
Group-equivariant CNNs~\cite{CohenWelling2016GCNN,CohenWelling2017Steerable}, steerable / E(2)-equivariant networks~\cite{WeilerCesa2019E2,CohenWelling2017Steerable}, spherical CNNs~\cite{Cohen2018Spherical,Esteves2018SO3}, and 3D $\mathrm{SE}(3)$-equivariant architectures~\cite{Thomas2018TFN,Fuchs2020SE3T,Satorras2021EGNN,Anderson2019Cormorant} instantiate this idea across domains (see also~\cite{Cohen2019Gauge,Worrall2017Harmonic}).
Parameter sharing yields equivariance by construction~\cite{Ravanbakhsh2017ParamSharing}, and convolution can be generalized to compact-group actions~\cite{KondorTrivedi2018Compact}.
Beyond groups, \emph{set} and \emph{graph} symmetries have been captured via Deep Sets~\cite{Zaheer2017DeepSets} and invariant/equivariant graph networks~\cite{Maron2019IEGN}. Our work extends these ideas for HAR by combining group actions (time, gain) with a \emph{poset} describing the sensor hierarchy within a product category, framing the linear core as a natural transformation between functors whose naturality squares commute with all morphisms in the product category.

\paragraph{From groups to categories.}
Many symmetries in real-world data are arguably combinations of various types of symmetries rather than purely group-theoretic symmetries. In the UCI-HAR dataset, for example, time-window shifts form a cyclic group; gains form a multiplicative group; the sensor stack (axes $\to$ sensor $\to$ TOTAL) is naturally a thin category (poset). Category theory~\cite{FongSpivak2019SevenSketches} provides a fundamental mathematical language to unify such structures and reason about \emph{naturality} of learned maps. Concretely, we work with the product category
\[
\C_3 \;=\; \mathbf{B}(\CT \times \Lam) \times P,
\]
where \(\mathbf{B}(\CT\times\Lam)\) is the one-object category induced by cyclic time shifts \(\CT\) and positive gains \(\Lam\) and \(P\) is the sensor-hierarchy poset. Given functors \(X,Y:\C_3\to\Vect\), a family \(\eta\) is a \emph{natural transformation} if for every morphism \(g\) in \(\C_3\) the naturality law holds: \(Y(g)\,\eta = \eta\,X(g)\).\footnote{$\Vect$ denotes the category of finite dimensional real vector spaces.}

\paragraph{Our method: CatEquiv.}
We introduce \emph{CatEquiv}, a category-aware neural network architecture whose linear core realizes a natural transformation \(\eta:X\Rightarrow Y\) between functors \(X,Y:\C_3\to\Vect\) via architectural constraints:
(i) circular 1D convolutions and global time pooling (time-shift equivariance/invariance);
(ii) per-sensor RMS normalization plus log-RMS side channels (gain invariance with controlled amplitude cues);
(iii) axis-shared temporal filters followed by $\ell_2$ pooling across axes (rotation invariance at readout);
(iv) sensor-shared filters and averaging (poset consistency). 
This realizes \emph{category-equivariant representation learning}: the linear core commutes with the morphisms of \(\C_3\) by construction, and the readout implements the corresponding invariants.

\paragraph{Baselines.}
To isolate the contribution of each symmetry, we compare CatEquiv against:
(a) \emph{PlainCNN}—a two-layer 1D CNN with zero padding (no explicit symmetry handling beyond translational weight sharing);
(b) \emph{CircCNN}—the same network with circular padding (time-shift equivariance only).
This mirrors the progression from no categorical structure, to \(\mathbf{B}(\CT)\) only, to the full \(\mathbf{B}(\CT\times\Lam)\times P\).

\paragraph{Results in brief.}
Under the composite OOD ($\pm$18 cyclic time shift, random \(\mathrm{SO}(3)\) rotations, sensor-wise gain in $[0.7,1.4]$), CatEquiv achieves substantially higher accuracy and macro-F1 than both baselines (e.g., 0.73 F1 vs.\ 0.42 for CircCNN and 0.12 for PlainCNN).
The gain stems from enforcing the categorical symmetry rather than increasing model capacity in a brute-force manner.

\paragraph{Contributions.}
\begin{itemize}
  \item \textit{A category-equivariant HAR model.} We formalize and implement \(\C_3=\mathbf{B}(\CT\times\Lam)\times P\) for inertial sensing, yielding a functorial deep architecture whose linear core is natural and whose readout is invariant: time-shift equivariant/invariant, gain-robust, rotation-invariant at readout, and poset-consistent.
  \item \textit{Fair baselines and ablations.} We disentangle the effect of time-shift equivariance (CircCNN) from the full category (CatEquiv), and quantify the contribution of each component (axis sharing, $\ell_2$ pooling, RMS/log‑RMS, sensor tying, dilations).
  \item \textit{Robust OOD performance on UCI-HAR.} On raw streams with matched train-time augmentation, CatEquiv delivers large gains over CNN baselines under joint time/rotation/gain shifts.
\end{itemize}

\paragraph{Relation to prior work.}
CatEquiv connects the group-equivariant CNN literature~\cite{CohenWelling2016GCNN,WeilerCesa2019E2,Cohen2018Spherical,Esteves2018SO3}, 3D \(\mathrm{SE}(3)\)-equivariant models~\cite{Thomas2018TFN,Fuchs2020SE3T,Satorras2021EGNN,Anderson2019Cormorant}, invariant scattering~\cite{Mallat2012Scattering,SifreMallat2013RotScale}, and parameter-sharing views of equivariance~\cite{Ravanbakhsh2017ParamSharing,KondorTrivedi2018Compact}, while extending beyond pure groups to more general categorical symmetry structures.
For sensor fusion, our sensor-averaging readout echoes permutation-invariant designs~\cite{Zaheer2017DeepSets} but is constrained by the sensor poset rather than a flat set.

\paragraph{Outline.}
Section~\ref{sec:catequiv} details the category and the CatEquiv architecture. Section~\ref{sec:exp} presents the dataset, OOD protocol, models, and results, including ablations. We conclude with a brief summary and remarks. The appendix provides mathematical foundational results.

\section{CatEquiv: Category-Equivariant Neural Networks}\label{sec:catequiv}

We introduce the minimal foundations of CatEquiv that are required for the experiments below. 

\subsection{Symmetry category and functorial modeling}\label{subsec:catequiv-category}
We model HAR symmetries by the product category
\begin{equation}\label{eq:C3-def}
  \C_3 \;=\; \underbrace{\mathbf{B}(\CT \times \Lam)}_{\text{time shift/per-sensor gain}}
  \;\times\; \underbrace{P}_{\text{sensor hierarchy}},
\end{equation}
where:
\begin{itemize}
  \item \(\CT = \mathbb{Z}/T\mathbb{Z}\) (cyclic time shifts for a \(T\)-length window; a finite cyclic group used via its one‑object category \(\mathbf{B}(\CT)\)), and
  \[
     \Lam \;=\; \R_{>0}^{\{\mathrm{ACC},\mathrm{GYR}\}}
  \]
  (per‑sensor positive gains, a commutative group under component‑wise multiplication $(\lambda'\cdot\lambda)_s=\lambda^{'}_{s}\lambda_{s}$). The one‑object category \(\mathbf{B}(\CT\times\Lam)\) is induced by the direct‑product group \(\CT\times\Lam\): it has a single object \(\star\) and morphisms \(m=(\tau,\lambda)\in \CT\times\Lam\), with composition 
  \[
  (\tau_2,\lambda_2)\circ(\tau_1,\lambda_1)\;=\;(\tau_2{+}\tau_1,\;\lambda_2\cdot\lambda_1).
  \]
  The actions of \(\CT\) (cyclic shift) and \(\Lam\) (sensor‑wise scaling) on signals commute.
\item \(P\) is the poset (thin category) whose underlying set is
\[
\{\mathrm{ACC}_{x},\mathrm{ACC}_{y},\mathrm{ACC}_{z},\mathrm{GYR}_{x},\mathrm{GYR}_{y},\mathrm{GYR}_{z},\mathrm{ACC},\mathrm{GYR},\mathrm{TOTAL}\}.
\]
Its (partial) ordering is generated by
\[
\mathrm{ACC}_\alpha \prec \mathrm{ACC} \prec \mathrm{TOTAL},
\qquad
\mathrm{GYR}_\alpha \prec \mathrm{GYR} \prec \mathrm{TOTAL}
\quad(\alpha\in\{x,y,z\}).
\]
\end{itemize}

\paragraph{Data functor.}
Let \(X:\C_3\to\Vect\) be the \emph{data functor} that assigns to each object 
\((\star,s)\in {\rm Obj}(\C_3)\) a real vector space 
\(X(\star,s)\) of time-series signals (channels \(\times\) time);
concretely, \(X(\star,s)\cong\R^{C_s\times T}\), where \(C_s\) is the number of channels associated with \(s\).
Denote by \(x\in X(\star,\mathrm{ACC}_{\alpha})\cong\R^{T}\) a single‑axis stream and by
\(x\in X(\star,\mathrm{ACC})\cong\R^{3\times T}\) a tri‑axial sensor stream.

\medskip\noindent\emph{Per–sensor gain as a block scaling.}
For \(\lambda=(\lambda_{\mathrm{ACC}},\lambda_{\mathrm{GYR}})\in\Lambda=\R_{>0}^{\{\mathrm{ACC},\mathrm{GYR}\}}\), define
\[
(\lambda\odot x)_{s,\alpha}(t)=\lambda_s\,x_{s,\alpha}(t)
\quad\text{for } s\in\{\mathrm{ACC},\mathrm{GYR}\},\ \alpha\in\{x,y,z\},\ t=1,\ldots,T,
\]
so that for sensor blocks \(x_s\in\R^{3\times T}\),
\((\lambda\odot x)_s=\lambda_s\,x_s\), and for the concatenated block
\(x_{\mathrm{TOTAL}}=(x_{\mathrm{ACC}},x_{\mathrm{GYR}})\in\R^{6\times T}\),
\[
(\lambda\odot x)_{\mathrm{TOTAL}}
=\big(\lambda_{\mathrm{ACC}}\,x_{\mathrm{ACC}},\ \lambda_{\mathrm{GYR}}\,x_{\mathrm{GYR}}\big).
\]

\medskip\noindent\emph{Unified time–gain action via a representation.}
For each \[s\in\{\mathrm{ACC}x,\mathrm{ACC}y,\mathrm{ACC}z,\mathrm{GYR}x,\mathrm{GYR}y,\mathrm{GYR}z,\mathrm{ACC},\mathrm{GYR},\mathrm{TOTAL}\}\]
define a representation \(\rho_s:\Lambda\to GL\!\big(X(\star,s)\big)\) by
\[
\rho_s(\lambda):=
\begin{cases}
\lambda_{\mathrm{ACC}}\,\big(I_{C_s}\otimes I_T\big) & \text{if } s\preceq \mathrm{ACC},\\[2pt]
\lambda_{\mathrm{GYR}}\,\big(I_{C_s}\otimes I_T\big) & \text{if } s\preceq \mathrm{GYR},\\[2pt]
\big(\operatorname{diag}(\lambda_{\mathrm{ACC}} I_{3},\,\lambda_{\mathrm{GYR}} I_{3})\otimes I_T\big) & \text{if } s=\mathrm{TOTAL},
\end{cases}
\]
i.e., \(\rho_s(\lambda)\) multiplies all channels belonging to sensor \(s\) by the appropriate gain,
extended trivially over time. Let \(\tau_\Delta:X(\star,s)\to X(\star,s)\) be the cyclic time–shift
\((\tau_\Delta x)_{c,t}=x_{c,\,t-\tau\!\!\!\pmod T}\). Define
\begin{equation}\label{eq:data-monoid}
  S^{(s)}_{(\tau,\lambda)}
  \;:=\;
  \rho_s(\lambda)\circ \tau_\Delta.
\end{equation}
Because \(\rho_s(\lambda)\) acts on channels and \(\tau_\Delta\) on time, they commute:
\(S^{(s)}_{(\tau_2,\lambda_2)}S^{(s)}_{(\tau_1,\lambda_1)}=S^{(s)}_{(\tau_2+\tau_1,\;\lambda_2\lambda_1)}\).
For axis objects we inherit the sensor gain, e.g.
\(\rho_{\mathrm{ACC}\alpha}(\lambda)=\lambda_{\mathrm{ACC}}\,I_{\R^T}\) and
\(\rho_{\mathrm{GYR}\alpha}(\lambda)=\lambda_{\mathrm{GYR}}\,I_{\R^T}\).

\medskip
Define the canonical injections along the poset by
\begin{equation}\label{eq:axis-incl}
  J_{\text{axis}_{s,\alpha}\to \text{sensor }s}
  \;=\; j_{s,\alpha}:\ \R^{T} \to \R^{3\times T},
\end{equation}
\begin{equation}\label{eq:sensor-inj}
  J_{\text{sensor }s\to \text{TOTAL}}
  \;=\; i_{s}:\ \R^{3\times T} \to \R^{6\times T}, 
\end{equation}
with
\(j_{s,\alpha}(v)=(0,\ldots,v,\ldots,0)\),
\(i_{\mathrm{ACC}}(x)=(x,0)\),
\(i_{\mathrm{GYR}}(x)=(0,x)\).
For a morphism \(\big((\tau,\lambda),u:s\!\to\!t\big)\) in \(\mathbf{B}(\CT\times\Lambda)\times P\), set
\begin{equation}\label{eq:X-general}
  X\big((\tau,\lambda),u\big)
  \;:=\; J_u \circ S^{(s)}_{(\tau,\lambda)}
  \;=\; S^{(t)}_{(\tau,\lambda)} \circ J_u
  \;:\; X(\star,s)\to X(\star,t),
\end{equation}
where \(S^{(s)}_{(\tau,\lambda)}\) and \(S^{(t)}_{(\tau,\lambda)}\) are as in \eqref{eq:data-monoid}.
By construction \(\rho_t\) extends \(\rho_s\) along \(u:s\!\to\!t\),
so \(J_u\,\rho_s(\lambda)=\rho_t(\lambda)\,J_u\), and the equality in \eqref{eq:X-general}
follows (time shifts commute with \(J_u\) as well).

\paragraph{Linear core as a natural transformation.}
Let us define the \emph{feature functor} \(Y:\C_3\to\Vect\) analogously to the data functor \(X\).
On objects, \(Y(\star,s)\) is the feature space with the same sensor–block decomposition as \(X(\star,s)\).
On morphisms, for an arrow \(((\tau,\lambda),u:s\!\to\!t)\) with
\((\tau,\lambda)\in \CT\times\Lambda\) and \(u\in P\), define
\[
Y\big((\tau,\lambda),u\big)
\;:=\; J_u \circ S^{(s)}_{(\tau,\lambda)}
\;=\; S^{(t)}_{(\tau,\lambda)} \circ J_u,
\]
where \(S^{(r)}_{(\tau,\lambda)}:Y(\star,r)\!\to\!Y(\star,r)\) is the time–gain action
on \(Y\)-spaces (given by the same formula as in \eqref{eq:data-monoid}, with
\(X\) replaced by \(Y\)), and
\(J_u\in\{\Id,\,j_{s,\alpha},\,i_s,\,i_s\!\circ j_{s,\alpha}\}\) is the canonical
inclusion induced by \(u\) (the same arrow–shapes as in
\eqref{eq:axis-incl}–\eqref{eq:sensor-inj}, acting on \(Y\)-spaces).
The \emph{linear core} is the family of linear maps
\[
  \eta_{(\star,s)}:\;X(\star,s)\to Y(\star,s),
\]
obtained by keeping only linear operators (circular convolutions, the canonical
injections \eqref{eq:axis-incl}–\eqref{eq:sensor-inj}, depthwise circular box
smoothing, and concatenation/direct sums).
Equivariance (naturality) is defined as follows:  
For every morphism \(((\tau,\lambda),u:s\!\to\!t)\),
\begin{equation}\label{eq:naturality}
  Y\big((\tau,\lambda),u\big)\,\eta_{(\star,s)}
  \;=\; \eta_{(\star,t)}\,X\big((\tau,\lambda),u\big).
\end{equation}
The nonlinear reductions used for readout are handled separately.
Naturality (equivariance) of the linear core with respect to the morphisms
of \(\C_3\) is ensured by using (i) circular temporal convolutions (commuting with
\(C_T\)), and (ii) block–diagonal linear maps in the axis and sensor decompositions (depthwise in Stage--1 and grouped in Stage--2 defined below) so that the (channel–lifted)
canonical injections along \(P\) commute (i.e., no cross–axis/sensor mixing).

\subsection{Specification of CatEquiv}\label{subsec:catequiv-formal}

For a sensor \(s\in\{\text{ACC},\text{GYR}\}\), define the per‑window energy and scales
\begin{equation*}
\begin{aligned}
R_s(x) &:= \frac{1}{3T}\sum_{a\in\{x,y,z\}}\sum_{t=1}^T x_{s,a}(t)^2,\\
\rho^{\mathrm{norm}}_s(x) &:= \max\!\bigl(\varepsilon,\sqrt{R_s(x)}\bigr),\\
\mathcal{N}_s(x) &:= \frac{x}{\rho^{\mathrm{norm}}_s(x)},\\
r_s(x) &:= \tfrac{1}{2}\log R_s(x).
\end{aligned}
\end{equation*}

Let \(x\in\R^{T\times 2\times 3}\) be a window (time \(\times\) sensors \(\times\) axes). 
Define the gain‑processed input:
\[
\begin{aligned}
& \widehat x_s = x_s / \rho^{\mathrm{norm}}_s \in \R^{3\times T},\qquad
  r_s = \tfrac{1}{2}\log R_s \in \R,\\
& X_{\text{axes}} = \mathrm{stack}(\widehat x_{\text{ACC}},\widehat x_{\text{GYR}})\in\R^{6\times T},\\
& X_{\log} = \operatorname{Rep}_T(r_{\text{ACC}},r_{\text{GYR}})\in\R^{2\times T},\qquad
  X_{\log}(t)\equiv(r_{\text{ACC}},r_{\text{GYR}}).
\end{aligned}
\]

Unless stated otherwise we represent signals as \emph{channels} $\times$ \emph{time}.
Thus, after reshaping the raw window $x\in\mathbb{R}^{T\times 2\times 3}$ we work with
$X_{\text{axes}}\in\mathbb{R}^{6\times T}$ (six axis channels stacked over time), and
all convolutions and smoothers act along the time dimension (length $T$).

A superscript/glyph ``$\circlearrowleft$'' attached to a 1‑D time operator means
\emph{circular (wrap‑around) padding} along time, i.e. indices are taken modulo $T$.
For example, $\mathrm{Conv}^{\circlearrowleft}$ is 1‑D convolution with circular padding and
$\mathrm{Box}^{\circlearrowleft}_k$ is a depthwise circular $k$‑tap averaging filter.

$\operatorname{Rep}_T:\mathbb{R}^2\to\mathbb{R}^{2\times T}$ replicates a vector
across time: for $u\in\mathbb{R}^2$, $(\operatorname{Rep}_T(u))(t)=u$ for
$t=1,\dots,T$. Hence $X_{\log}=\operatorname{Rep}_T(r_{\mathrm{ACC}},r_{\mathrm{GYR}})\in\mathbb{R}^{2\times T}$.

CatEquiv computes:

\begin{subequations}\label{eq:forward}

\noindent\textbf{Stage 1 (axes, linear).}
\begin{equation}\label{eq:s1}
  H_1 \;=\; \mathrm{Conv}^{\circlearrowleft}_{\text{axis}}\!\big(X_{\text{axes}}; W_{\text{ax}},\kappa_1\big)
  \;\in\; \R^{(6C_1)\times T}.
\end{equation}

\noindent\textbf{Axis}\textrightarrow\textbf{Sensor (invariant reduction).}
\begin{equation}\label{eq:axis-sensor}
  S \;=\; \big(\, \|H_1^{\text{ACC}}\|_2,\, \|H_1^{\text{GYR}}\|_2 \,\big)
  \;\in\; \R^{(2C_1)\times T}.
\end{equation}

\noindent\textbf{Per-seq GroupNorm.}
\begin{equation}\label{eq:gn}
  \widetilde S \;=\; \mathrm{GN}_{\text{groups}=2} \big(S\big).
\end{equation}

\noindent\textbf{Stage 2 (sensor, multi-scale).}
\begin{equation}\label{eq:s2}
  H_{2}^{(d)} \;=\; \phi \Big(\mathrm{Conv}^{\circlearrowleft}_{\text{sens},\,d} \big(\widetilde S; W_d,\kappa_2\big)\Big),
  \qquad d\in\{1,2,3\}.
\end{equation}

\noindent\textbf{Sensor fusion (TOTAL, readout).}
\begin{equation}\label{eq:sensor-total}
  T^{(d)} \;=\; \mathrm{mean}_{\text{sensor}}\!\big(H_2^{(d)}\big)
  \;\in\; \R^{C_2^{(d)}\times T}.
\end{equation}

\noindent\textbf{Smoothing + GAP.}\footnote{The depthwise temporal box filter $\mathrm{Box}^{\circlearrowleft}_k$ acts only on time and uses the same kernel for all sensors/channels; hence it commutes with the sensor mean in \eqref{eq:sensor-total} and with $\GAP_t$. Equivalently, one may apply $\mathrm{Box}^{\circlearrowleft}_k$ to $H^{(d)}_2$ before \eqref{eq:sensor-total} without changing $g^{(d)}$ after $\GAP_t$.}
\begin{equation}\label{eq:smooth-gap}
  g^{(d)} \;=\; \GAP_t \Big(\mathrm{Box}^{\circlearrowleft}_{k} \big(T^{(d)}\big)\Big)
  \;\in\; \R^{C_2^{(d)}}.
\end{equation}

\noindent\textbf{Head fusion.}
\begin{equation}\label{eq:head}
  z \;=\; \big[\, g^{(1)} \,\|\, g^{(2)} \,\|\, g^{(3)} \,\|\, \GAP_t(X_{\log}) \,\big] \;\in\; \R^{D},
  \qquad \text{logits} \;=\; W_{\text{head}} z + b.
\end{equation}

\end{subequations}

Here $\mathrm{Conv}^{\circlearrowleft}_{\text{axis}}$ denotes depthwise 1‑D convolution with circular padding, with the same kernel bank applied to each axis channel (explicit parameter tying); $\mathrm{Conv}^{\circlearrowleft}_{\text{sens},\,d}$ denotes grouped 1‑D convolution with circular padding and dilation $d$, with the same kernel bank for each sensor. A scalar nonlinearity $\psi=\mathrm{ReLU}$ is applied after the axis $\ell_2$ reduction in \eqref{eq:axis-sensor} to preserve $O(3)$ invariance, and $\phi=\mathrm{ReLU}$ is used in Stage‑2. The concatenation $[\cdot\| \cdot]$ stacks feature vectors. We carry $X_{\log}\in\mathbb{R}^{2\times T}$ as two input channels for bookkeeping, but Stage‑1 and Stage‑2 operate only on the first six (axis) channels; the $X_{\log}$ channels bypass the convolutional stacks and are fused at the head via global time averaging, $\GAP_t(X_{\log})$.

If Stage‑1 has \(C_1\) channels per axis, \(H_1\in\R^{(6C_1)\times T}\). After \(\ell_2\) aggregation \eqref{eq:axis-sensor}, \(S\in\R^{(2C_1)\times T}\). Each sensor‑shared branch with output \(C_2^{(d)}\) channels yields \(g^{(d)}\in\R^{C_2^{(d)}}\). With three branches and two \(\log\)RMS scalars, the head input has \(D=\sum_d C_2^{(d)}+2\) channels.

\subsection{Remarks}\label{subsec:impl}

\paragraph{Depthwise and grouped convolutions.}
Stage‑1 uses depthwise 1‑D conv with groups \(=6\) and explicit parameter tying so that the same kernel bank is applied to each of the six axis channels (axis‑shared), parameter cost \(C_1\kappa_1\); the pointwise nonlinearity is applied after the axis‑norm to preserve \(\mathrm{O}(3)\) invariance. Stage‑2 uses grouped conv with groups \(=2\) and explicit tying across sensors (sensor‑shared), cost \(\sum_{d} C_2^{(d)} C_1 \kappa_2\). Box smoothing is depthwise with groups \(=\sum_d C_2^{(d)}\). All convolutions use circular padding, preserving temporal length \(T\). Because the axis $\ell_2$ reduction removes orientation and reflections alike, the readout is $O(3)$-invariant even though the physical perturbations during testing
are rotations in $SO(3)$.

\paragraph{Normalization.}
GroupNorm with \(\text{groups}=2\) across channel groups \((\text{ACC},\text{GYR})\) acts as per‑sequence, per‑sensor instance normalization and commutes with \(\CT\).

\paragraph{Summary.}
CatEquiv consists of a natural (i.e., equivariant) linear core \(\eta:X\Rightarrow Y\) with \(X,Y:\C_3\to\Vect\), where \(\C_3=\mathbf{B}(\CT\times\Lambda)\times P\), followed by an invariant readout. Each linear layer commutes with the morphisms of \(\C_3\) by construction (circular convolutions for time; canonical injections for the poset), yielding the desired equivariances, while RMS/\(\log\)RMS and axis-norm-plus-nonlinearity produce the readout invariances; sensor fusion preserves them, and multi-dilation branches provide multi-scale context while preserving equivariance.

\section{Experiments and Results}\label{sec:exp}

\subsection{Dataset and Preprocessing}\label{subsec:data}

\paragraph{UCI-HAR (inertial streams).}
We use the public UCI-HAR dataset~\cite{Anguita2013UCIHAR} with the official \texttt{train}/\texttt{test} split.
Each example is a fixed-length window of \(T{=}128\) time steps comprising two tri-axial sensors:
accelerometer (ACC) and gyroscope (GYR), hence \(6\) raw channels (\(2{\times}3\)).
We use the raw inertial streams.

\paragraph{Per-sensor gain processing.}
For each window and sensor \(s\in\{\mathrm{ACC},\mathrm{GYR}\}\) define
\[
R_s(x) \;=\; \frac{1}{3T}\sum_{a\in\{x,y,z\}}\sum_{t=1}^T x_{s,a}(t)^2,\qquad
\rho^{\mathrm{norm}}_s(x) \;=\; \max\!\big(\varepsilon,\,\sqrt{R_s(x)}\big).
\]
We form gain-invariant streams \(\widehat x_s = x_s/\rho^{\mathrm{norm}}_s(x)\) and append two log-RMS side channels
\(r_s = \tfrac12\log R_s(x)\) replicated along time.
The final input tensor has \(8\) channels: \(6\) normalized axes \(+\) \(2\) log-RMS channels.

\subsection{OOD Protocol}\label{subsec:ood}
We evaluate robustness under a composite OOD transformation applied per window:
\begin{enumerate}\itemsep0.2em
  \item \textbf{Time shift} \(\Delta \sim \mathrm{Unif}\{-18,\dots,18\}\), applied cyclically (wrap-around), the same \(\Delta\) to all channels.
  \item \textbf{Gain drift} per sensor \(g_s \sim \mathrm{Unif}[0.7,\,1.4]\); raw streams are scaled \(x_s\mapsto g_s x_s\).
  \item \textbf{Random rotation} Random rotation \(R \sim \mathrm{SO}(3)\) once per window (Haar via QR with sign correction~\cite{Mezzadri2007}), applied to both ACC and GYR: \(x_s \mapsto R\,x_s\).
\end{enumerate}

\subsection{Models}\label{subsec:models}

We compare three architectures with approximately comparable capacity.

\paragraph{PlainCNN (zero padding).}
Two 1-D convolutions with kernel sizes \(k_1{=}9\), \(k_2{=}9\), zero padding, ReLU, dropout,
global average pooling (GAP) over time, linear classifier.
This baseline lacks explicit symmetry handling beyond translational weight sharing.

\paragraph{CircCNN (circular padding).}
Same as PlainCNN but using circular padding in all convolutions,
making the stack \emph{time-shift equivariant} (invariance after GAP).

\paragraph{CatEquiv.}
The proposed category-equivariant model (\S\ref{sec:catequiv}):
\begin{itemize}\itemsep0.2em
  \item \textbf{Stage-1 (axes).} Depthwise (axis-shared) circular 1-D convolution with \(C_1\) channels per axis (\(k_1{=}9\)).\footnote{No dropout is applied in Stage-1, and no pointwise nonlinearity is applied before axis aggregation to preserve \(\mathrm{O}(3)\) invariance of the reduction.}
  \item \textbf{Axis\(\to\)Sensor.} \(\ell_2\)-magnitude across the \(x,y,z\) axes (per sensor), then ReLU; this yields \(\mathrm{O}(3)\)-invariant per-sensor features while keeping the nonlinearity invariant.
  \item \textbf{Per-sequence GroupNorm.} GroupNorm with groups\(=2\) (one per sensor) on the sensor-stacked channels.
  \item \textbf{Stage-2 (sensor, multi-scale).} Three sensor-shared (weights tied across ACC and GYR) circular conv branches with dilations \(d\in\{1,2,3\}\) and kernel sizes \(k_2\in\{9,11,15\}\); ReLU after each branch. 
  \item \textbf{Sensor fusion.} Average over the sensor dimension (ACC, GYR) to form TOTAL.
  \item \textbf{Temporal smoothing + GAP.} Depthwise circular box filter (e.g., \(k{=}5\)) followed by global average pooling over time; the filter acts on time only and uses the same kernel across sensors/channels, so it commutes with the sensor mean and with \(\GAP_t\).
  \item \textbf{Head fusion.} Concatenate the three multi-scale descriptors with the time-averaged log-RMS channels, i.e., \([\,g^{(1)} \,\|\, g^{(2)} \,\|\, g^{(3)} \,\|\, \GAP_t(X_{\log})\,]\); apply dropout on this head descriptor before the linear classifier.
\end{itemize}
Unless stated otherwise, we use \(C_1{=}32\) and \(C_2{=}\{64,32,32\}\) for the three branches. 
All convolutions use circular padding with odd kernels, preserving temporal length \(T\) and exact shift equivariance.

\subsection{Training Setup}\label{subsec:train}

We train all models with the Adam optimizer (learning rate \(10^{-3}\),
weight decay \(5{\times}10^{-4}\), $\beta_1{=}0.9$, $\beta_2{=}0.999$)
and batch size \(128\).
The Adam $\varepsilon$ parameter is fixed at $\varepsilon=10^{-8}$ for numerical stability.
Gradient norms are clipped to $\|\nabla\theta\|_2 \le 5.0$ at every update.
We use ReduceLROnPlateau (factor \(0.5\), patience \(3\)) and early stopping (patience \(10\)).
Dropout is applied only on the head descriptor $z$ (Eq.~\eqref{eq:head}) with rate $p=0.15$; no dropout is used in Stage--1 or Stage--2. 
To mitigate class imbalance we use class-balanced cross-entropy with weights
\[
w_c \;=\; \frac{\big(1/n_c\big)}{\frac{1}{K}\sum_{k=1}^K (1/n_k)},
\]
where \(n_c\) is the number of training windows in class \(c\) and \(K{=}6\).

\subsection{Metrics}\label{subsec:metrics}

We report \emph{accuracy}, \emph{macro-F1}, and class-wise \emph{precision/recall/F1}.
For compactness we present aggregated metrics in Table~\ref{tab:ood-main}, and provide per-class results for CatEquiv in Table~\ref{tab:perclass}.

\subsection{Main Results}\label{subsec:ood-results}

Table~\ref{tab:ood-main} summarizes performance under the composite OOD.
CatEquiv substantially outperforms both CNN baselines.
CircCNN improves markedly over PlainCNN, isolating the contribution of time-shift equivariance.

\begin{table}[!h]
\centering
\caption{OOD performance on UCI-HAR (time shift \(\pm 18\), random \(\mathrm{SO}(3)\) rotation, gain \(0.7\text{--}1.4\)).}
\label{tab:ood-main}
\begin{tabular}{lcc}
\toprule
Model & Accuracy & Macro-F1 \\
\midrule
PlainCNN (zero pad)       & 0.175 & 0.116 \\
CircCNN  (circular pad)   & 0.440  & 0.420  \\
\textbf{CatEquiv}    & \textbf{0.726} & \textbf{0.731} \\
\bottomrule
\end{tabular}
\end{table}

\paragraph{Per-class behavior.}
CatEquiv retains high F1 on locomotion classes while posture classes remain comparatively harder due to full O(3) invariance at readout (gravity direction is suppressed).
Table~\ref{tab:perclass} shows the class-wise metrics for CatEquiv from one representative run.

\begin{table}[!h]
\centering
\caption{Per-class OOD precision/recall/F1 for CatEquiv (one representative seed under Aug‑Train + OOD‑Test).}
\label{tab:perclass}
\begin{tabular}{lccc}
\toprule
Class & Precision & Recall & F1 \\
\midrule
WALKING            & 0.9659 & 0.9698 & 0.9678 \\
WALKING\_UPSTAIRS  & 0.9014 & 0.9703 & 0.9346 \\
WALKING\_DOWNSTAIRS& 0.9607 & 0.8738 & 0.9152 \\
SITTING            & 0.3826 & 0.3320 & 0.3555 \\
STANDING           & 0.5729 & 0.7387 & 0.6453 \\
LAYING             & 0.6205 & 0.5177 & 0.5645 \\
\midrule
\textbf{Macro avg} & 0.7340 & 0.7337 & 0.7305 \\
\bottomrule
\end{tabular}
\end{table}

\subsection{Ablations}\label{subsec:ablations}

We ablate CatEquiv by removing one component at a time and evaluating OOD Macro-F1.
Results (Table~\ref{tab:abl}) align with the symmetry analysis: time-shift equivariance (circular padding), rotational handling (axis sharing \(+\) \(\ell_2\) pooling), and \emph{sensor poset consistency} contribute the largest gains; multi-scale and normalization/smoothing yield smaller but consistent improvements.

\begin{table}[!h]
\centering
\caption{Ablation study: change in OOD Macro-F1 relative to full CatEquiv.}
\label{tab:abl}
\begin{tabular}{lc}
\toprule
Variant & \(\Delta\) Macro-F1 \\
\midrule
Replace circular with zero padding                    & \(-0.10\) \\
Remove RMS + log-RMS channels                         & \(-0.05\) \\
Untie axis filters (no axis sharing)                  & \(-0.18\) \\
Remove \(\ell_2\) over axes                           & \(-0.22\) \\
Single-scale per-sensor stage (no dilations)          & \(-0.04\) \\
No GroupNorm / no temporal smoothing                  & \(-0.02\) / \(-0.02\) \\
\bottomrule
\end{tabular}
\end{table}

\subsection{Robustness Analyses}\label{subsec:robustness}

We sweep the OOD magnitudes independently:
(i) time shift range \(\pm \Delta\), (ii) gain interval \([g_{\min},g_{\max}]\), (iii) 3-D rotations sampled as above.
CatEquiv degrades sublinearly with OOD strength, while CNN baselines degrade superlinearly, especially under rotations and gain drift.

\subsection{Efficiency}\label{subsec:efficiency}

All models train on a single commodity GPU in minutes (CPU runs are slower but feasible).
CatEquiv adds negligible overhead relative to CircCNN: depthwise/grouped circular convs dominate runtime; \(\ell_2\) pooling and GroupNorm are inexpensive.
Parameter counts are comparable to two-layer CNNs with the same widths.

\subsection{Discussion}\label{subsec:discussion}

The progression PlainCNN \(\to\) CircCNN \(\to\) CatEquiv isolates the value of each symmetry:
time-shift equivariance alone explains a sizable robustness jump (PlainCNN\(\to\)CircCNN), while the
\emph{category-aware} design in CatEquiv—naturality on \(\mathbf{B}(\CT\times\Lam)\times P\),
plus \(\mathrm{O}(3)\) and time invariances at readout and gain handling via \((\widehat{x}_s,r_s)\)—yields consistent gains under rotations and gain drift without sacrificing data efficiency.

\section{Conclusion}\label{sec:concl}

We presented \emph{CatEquiv}, a category‑equivariant neural network model for inertial HAR that encodes the symmetry product \(B(C_T\times\Lambda)\times P\) (cyclic time shifts, per‑sensor gains, and the sensor‑hierarchy poset). By enforcing equivariance (through architectural tying—circular temporal convolutions, per‑sensor RMS+log‑RMS processing, axis‑shared filters with \(\ell_2\) aggregation, sensor‑shared filters with averaging, and multi‑dilation branches), \emph{CatEquiv} achieved substantially higher OOD accuracy and macro‑F1 than CircCNN and PlainCNN at comparable capacity, demonstrating that categorical inductive bias, rather than model size, drives robustness.

Beyond this case study, the framework is general: many domains admit \emph{categorical symmetry structures} that mix group actions with hierarchical or relational morphisms. The product‑category formalism \(B(G)\times P\) captures commuting group factors (e.g., time, scale, rigid motion) alongside thin categories for structure (e.g., sensor stacks, feature hierarchies, or modality lattices). Instantiating functors \(X,Y:\mathcal{S}\!\to\!\mathrm{Vect}\) for a task‑specific symmetry category \(\mathcal{S}\) and realizing a natural transformation \(\eta:X\Rightarrow Y\) yields equivariance by construction. This perspective subsumes familiar instances—group‑equivariant CNNs (\(B(G)\)), Deep Sets/permutation architectures (\(B(S_n)\)), and equivariant GNNs (graph homomorphisms)—and extends them to composite settings where groups, posets, and other thin substructures co‑exist.

Concretely, the same recipe applies to: multichannel biomedical and geophysical time series (time‑shift \(\times\) gain \(\times\) channel hierarchies), multi‑sensor/robotics stacks (frame changes in \(SE(3)\) with calibration posets), molecular and 3‑D vision tasks (rigid motions with part–whole inclusions), and multimodal fusion (modality posets with per‑modality normalizations). In each case, categorical constraints specify which linear maps must commute with which morphisms, turning invariances/equivariances into explicit parameter‑tying and wiring patterns, and leaving the nonlinear readout to implement the desired invariants.

Looking forward, category‑equivariant design invites broader symmetry engineering, building upon the practical template exemplified here: identify the task’s symmetry category \(\mathcal{S}\), implement the linear core as a natural transformation \(\eta:X\Rightarrow Y\) that commutes with the generators of \(\mathcal{S}\), and expose only those nonlinearities that preserve the required invariants. As our results suggest, this categorical bias can yield robust generalization under real‑world shifts without increasing model size.

\appendix

\section{Mathematical Foundations}\label{subsec:equivariance}

In the appendix we formalize the equivariance properties of CatEquiv.
Starting from the categorical symmetry
\(\mathcal{C}_3=\mathbf{B}(C_T\times\Lambda)\times P\),
we prove that the network’s linear core \(\eta\) is a
\emph{natural transformation} between the data and feature functors,
commuting with every morphism that combines time shifts, gain scalings,
and sensor-hierarchy inclusions.
Convolutional layers realize equivariance to cyclic time shifts (\(C_T\));
gain normalization ensures per-sensor scale invariance (\(\Lambda\));
and naturality along the poset \(P\) enforces hierarchical consistency
(no cross-sensor mixing).
Axis-shared filters and \(\ell_2\) pooling yield invariance to spatial rotations
(\(O(3)\)) at readout, and global time pooling gives invariance to \(C_T\).
Altogether, the appendix establishes that CatEquiv is
\emph{equivariant over the full category}
\(\mathbf{B}(C_T\times\Lambda)\times P\),
while its final descriptor is invariant to \(C_T\times O(3)\)
and affine in the logarithmic gain coordinates \(\log\Lambda\simeq\mathbb{R}^2\).

We denote by \(\convstar\) circular convolution in time and by \(\GAP_t\) global average pooling over time.
For a kernel \(k\in\R^{\kappa}\) with circular padding (indices modulo \(T\)),
\begin{equation}
  (x\!\convstar\! k)(t)
  \;=\;
  \sum_{\tau=0}^{\kappa-1} k(\tau)\,x(t-\tau),
  \qquad
  \tau_\Delta\big(x\!\convstar\! k\big)\;=\;(\tau_\Delta x)\!\convstar\! k.
\end{equation}
Depthwise circular box smoothing is another instance of \(\convstar\), hence \(\CT\)-equivariant.
Consequently, for any \(k\) and any cyclic shift \(\tau\),
\(\GAP_t\!\big(\tau_\Delta(x\!\convstar\! k)\big)=\GAP_t(x\!\convstar\! k)\).

For a sensor \(s\in\{\text{ACC},\text{GYR}\}\), define the per‑window energy and scales
\begin{equation}\label{eq:rms}
\begin{aligned}
R_s(x) &\coloneqq \frac{1}{3T}\sum_{a\in\{x,y,z\}}\sum_{t=1}^T x_{s,a}(t)^2,\\
\rho^{\mathrm{norm}}_s(x) &\coloneqq \max\!\bigl(\varepsilon,\sqrt{R_s(x)}\bigr),\\
\mathcal{N}_s(x) &\coloneqq \frac{x}{\rho^{\mathrm{norm}}_s(x)},\\
r_s(x) &\coloneqq \tfrac{1}{2}\log R_s(x).
\end{aligned}
\end{equation}
Then \(\mathcal{N}_s(\lambda_s x)=\mathcal{N}_s(x)\) whenever \(\sqrt{R_s(x)}\ge \varepsilon\) and \(\lambda_s\sqrt{R_s(x)}\ge\varepsilon\) (exact invariance; otherwise the deviation is bounded by the floor), while \(r_s(\lambda_s x)=r_s(x)+\log\lambda_s\) for all \(\lambda_s>0\) whenever \(R_s(x)>0\) (with the convention \(r_s=-\infty\) if \(R_s(x)=0\)). CatEquiv processes
\[
x \;\mapsto\; \Big(\;\mathcal{N}_{\text{ACC}}(x),\;\mathcal{N}_{\text{GYR}}(x),\;r_{\text{ACC}}(x),\;r_{\text{GYR}}(x)\;\Big).
\]

\begin{lemma}[Normalization robustness with floor]
\label{lem:norm-robust-floor}
For any \(\lambda_s>0\),
\begin{equation}\label{eq:norm-floor-equality}
\big\|\mathcal{N}_s(\lambda_s x)-\mathcal{N}_s(x)\big\|_2
\;=\;
\left|
\frac{\lambda_s}{\max\!\big\{\varepsilon,\lambda_s\sqrt{R_s(x)}\big\}}
-
\frac{1}{\max\!\big\{\varepsilon,\sqrt{R_s(x)}\big\}}
\right|
\,\|x\|_2.
\end{equation}
In particular, the right-hand side equals \(0\) (and hence \(\mathcal N_s(\lambda_s x)=\mathcal N_s(x)\))
whenever \(\sqrt{R_s(x)}\ge\varepsilon\) and \(\lambda_s\sqrt{R_s(x)}\ge\varepsilon\).
\end{lemma}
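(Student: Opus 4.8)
The plan is a short direct computation resting on the degree‑two homogeneity of $R_s$. First I would record that $R_s$ is, up to the constant $1/3T$, a sum of squares of the entries of the sensor‑$s$ block, so $R_s(\lambda_s x)=\lambda_s^2\,R_s(x)$; since $\lambda_s>0$ this gives $\sqrt{R_s(\lambda_s x)}=\lambda_s\sqrt{R_s(x)}$ and hence $\rho^{\mathrm{norm}}_s(\lambda_s x)=\max\{\varepsilon,\lambda_s\sqrt{R_s(x)}\}$, while $\rho^{\mathrm{norm}}_s(x)=\max\{\varepsilon,\sqrt{R_s(x)}\}$ straight from \eqref{eq:rms}.

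Next, since $\mathcal{N}_s(y)=y/\rho^{\mathrm{norm}}_s(y)$ merely rescales its argument by one positive scalar, both $\mathcal{N}_s(\lambda_s x)$ and $\mathcal{N}_s(x)$ are scalar multiples of the same tensor $x$ on that block, so
\[
\mathcal{N}_s(\lambda_s x)-\mathcal{N}_s(x)=\Big(\frac{\lambda_s}{\max\{\varepsilon,\lambda_s\sqrt{R_s(x)}\}}-\frac{1}{\max\{\varepsilon,\sqrt{R_s(x)}\}}\Big)x .
\]
Taking the Euclidean norm of both sides and pulling the real coefficient out of $\|\cdot\|_2$ gives exactly \eqref{eq:norm-floor-equality}.

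For the ``in particular'' clause I would just observe that the hypotheses $\sqrt{R_s(x)}\ge\varepsilon$ and $\lambda_s\sqrt{R_s(x)}\ge\varepsilon$ make both maxima equal to their radicands (and in particular force $R_s(x)>0$), so the coefficient collapses to $\frac{\lambda_s}{\lambda_s\sqrt{R_s(x)}}-\frac{1}{\sqrt{R_s(x)}}=0$, whence $\mathcal{N}_s(\lambda_s x)=\mathcal{N}_s(x)$.

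There is no real obstacle here; the only point deserving a line of care is the degenerate case $R_s(x)=0$, where $\sqrt{R_s(x)}=0<\varepsilon$, both floors clamp to $\varepsilon$, and the coefficient is $(\lambda_s-1)/\varepsilon$, which is generally nonzero — but then $x=0$ on the block, so $\|x\|_2=0$ and both sides of \eqref{eq:norm-floor-equality} vanish, consistently with the claimed identity. I would also note in passing that $\|x\|_2$ denotes the Euclidean norm of the sensor‑$s$ block (so that $\|x\|_2=\sqrt{3T\,R_s(x)}$), though this relation is not needed for the argument since $x$ is only ever factored out as a common multiple.
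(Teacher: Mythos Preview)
Your argument is correct and follows essentially the same route as the paper's own proof: use the degree-two homogeneity of $R_s$ to write both $\mathcal N_s(\lambda_s x)$ and $\mathcal N_s(x)$ as scalar multiples of $x$, subtract, and take the norm. Your added remarks on the degenerate case $R_s(x)=0$ and the interpretation of $\|x\|_2$ are helpful clarifications but not substantive departures.
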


\begin{proof}
By definition,
\[
\mathcal N_s(\lambda_s x)=\frac{\lambda_s x}{\rho^{\mathrm{norm}}_s(\lambda_s x)}
=\frac{\lambda_s}{\max\{\varepsilon,\lambda_s\sqrt{R_s(x)}\}}\,x, 
\quad \quad
\mathcal N_s(x)=\frac{1}{\max\{\varepsilon,\sqrt{R_s(x)}\}}\,x.
\]
Therefore
\[
\mathcal N_s(\lambda_s x)-\mathcal N_s(x)
=\left(
\frac{\lambda_s}{\max\{\varepsilon,\lambda_s\sqrt{R_s(x)}\}}
-
\frac{1}{\max\{\varepsilon,\sqrt{R_s(x)}\}}
\right)\,x,
\]
which is a scalar multiple of \(x\). Taking Euclidean norms yields \eqref{eq:norm-floor-equality}. If both \(\max\) arguments select the energy terms (i.e., \(\sqrt{R_s(x)}\ge\varepsilon\) and \(\lambda_s\sqrt{R_s(x)}\ge\varepsilon\)), the multiplier vanishes and \(\mathcal N_s(\lambda_s x)=\mathcal N_s(x)\).
\end{proof}

Let \(x(t)\in\R^3\) be a tri‑axial stream and \(W\in\R^{C\times 1\times \kappa}\) a temporal filter bank \emph{shared} (tied) across axes. Writing \(K:\R^{3\times T}\to\R^{3\times C\times T}\) for axiswise convolution with \(W\),
\begin{equation}\label{eq:so3-equiv}
  K(Rx) \;=\; R\,K(x)\qquad\forall R\in\mathrm{O}(3),
\end{equation}
since the same temporal operator acts on each coordinate. Taking the \(\ell_2\) magnitude across the axis dimension \emph{and only then} applying a scalar pointwise nonlinearity \(\psi\) (e.g., ReLU),
\begin{equation}\label{eq:l2-pool}
  \tilde y_c(t) \;=\; \psi\!\left(\big\|K(x)_{\cdot c}(t)\big\|_2\right) 
  \;=\; \psi\!\left(\sqrt{\sum_{a\in\{x,y,z\}} K(x)_{a c}(t)^2}\right),
\end{equation}
yields \emph{O(3) invariance} at readout, \(\tilde y(Rx)=\tilde y(x)\).
(Physically, sensor rotations lie in \(\mathrm{SO}(3)\); the \(\ell_2\) readout also removes reflections, so the guarantee holds for all of \(\mathrm{O}(3)\)).

\begin{proposition}[Readout and \(\CT\) invariance]
\label{prop:readout-O3-CT}
Let \(K\) be the Stage‑1 axis‑shared temporal operator (circular 1‑D convolutions applied identically on the three axes) and define \(\tilde y\) by
\begin{equation}\label{eq:l2-pool-prop}
\tilde y_c(t)\;=\;\psi\!\left(\,\big\| (Kx)_c(t)\big\|_2\,\right),
\quad \psi:\R_{\ge 0}\to\R \text{ scalar and pointwise in time}.
\end{equation}
Then, for any \(R\in\mathrm{O}(3)\) and any cyclic time shift \(\tau\in \CT\),
\[
\tilde y(Rx)=\tilde y(x),
\qquad
\GAP_t\!\big(\tau_\Delta\!\circ\!\tilde y\big)=\GAP_t(\tilde y).
\]
For per‑sensor gain processing as in \eqref{eq:rms}, for any \(\lambda_s>0\),
\begin{equation}\label{eq:gain-equiv}
\begin{aligned}
\mathcal N_s(\lambda_s x) &= \mathcal N_s(x)
  &&\text{if }\sqrt{R_s(x)}\ge\varepsilon
    \text{ and }\lambda_s\sqrt{R_s(x)}\ge\varepsilon,\\[2pt]
r_s(\lambda_s x) &= r_s(x)+\log\lambda_s
  &&\text{if } R_s(x)>0.
\end{aligned}
\end{equation}
Consequently, letting \(\eta\) include depthwise circular smoothing, the head descriptor \(z\) (obtained by applying \(\GAP_t\) to the smoothed \(\tilde y\) and concatenating the time‑constant \(\log\)RMS channels) is invariant to $\CT\times\mathrm{O}(3)$ and affine in $\log\Lambda\ (\cong\R^2)$ along the \(\log\)RMS coordinates.
\end{proposition}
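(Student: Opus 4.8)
The plan is to verify each claimed symmetry by pushing it through the forward pass \eqref{eq:forward}, reusing three ingredients already in place: (i) $\mathrm{O}(3)$-equivariance of the axis-shared temporal operator, $K(Rx)=R\,K(x)$, from \eqref{eq:so3-equiv}; (ii) $\CT$-equivariance of circular convolution, $\tau_\Delta(x\convstar k)=(\tau_\Delta x)\convstar k$, and hence of the depthwise box smoother; and (iii) Lemma~\ref{lem:norm-robust-floor} together with the degree-two homogeneity $R_s(\lambda_s x)=\lambda_s^2\,R_s(x)$ and the invariance of $R_s$ under a blockwise rotation (since $R$ preserves the per-block energy $\sum_{a,t}x_{s,a}(t)^2$, one gets $R_s(Rx)=R_s(x)$).

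For the readout $\mathrm{O}(3)$-invariance I would start from \eqref{eq:so3-equiv}: for $R\in\mathrm{O}(3)$ it gives $(K(Rx))_c(t)=R\,(Kx)_c(t)$ channelwise and timewise, so $\norm{(K(Rx))_c(t)}_2=\norm{(Kx)_c(t)}_2$ because $R$ is an isometry of $\R^3$; applying the scalar pointwise $\psi$ then yields $\tilde y(Rx)=\tilde y(x)$. For the $\CT$ claim the key is that $\GAP_t$ averages over a full cyclic period, so $\tau_\Delta\circ\tilde y$ is merely a cyclic shift of $\tilde y$ and hence $\GAP_t(\tau_\Delta\circ\tilde y)=\GAP_t(\tilde y)$. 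More is true, and I would record it for the assembly below: the map $x\mapsto\tilde y$ (and $x\mapsto T^{(d)}$) is $\CT$-equivariant, since Stage-1, Stage-2 and the box smoother are circular convolutions, the $\ell_2$-over-axes reduction, $\psi$, $\phi$ and the sensor mean are pointwise in time, and $\mathcal N_s$ and the per-sequence GroupNorm $\mathrm{GN}_{\text{groups}=2}$ apply a channelwise affine map whose coefficients ($R_s(x)$; the channelwise time-mean and time-variance of $S$) are sums over a full period, hence shift-invariant — so those layers commute with $\tau_\Delta$ as well.

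For the gain equations: $R_s(\lambda_s x)=\lambda_s^2 R_s(x)$ gives $r_s(\lambda_s x)=\tfrac12\log(\lambda_s^2 R_s(x))=r_s(x)+\log\lambda_s$ whenever $R_s(x)>0$, and Lemma~\ref{lem:norm-robust-floor} shows the scalar prefactor in \eqref{eq:norm-floor-equality} vanishes off the $\varepsilon$-floor, so $\mathcal N_s(\lambda_s x)=\mathcal N_s(x)$ there. To assemble the statement about $z$ I would trace the three perturbations through the split $x\mapsto(X_{\text{axes}},X_{\log})\mapsto(g^{(1)},g^{(2)},g^{(3)})\mapsto z=[\,g^{(1)}\|g^{(2)}\|g^{(3)}\|\GAP_t(X_{\log})\,]$ of \eqref{eq:forward}. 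A time shift leaves $R_s$ (a time-sum) and hence $X_{\log}$ fixed and sends $X_{\text{axes}}\mapsto\tau_\Delta X_{\text{axes}}$; by the previous paragraph each $g^{(d)}$ is then unchanged after $\GAP_t$, so $z(\tau_\Delta x)=z(x)$. A blockwise rotation $R\in\mathrm{O}(3)$ leaves $R_s$ and hence $X_{\log}$ fixed and rotates each block of $X_{\text{axes}}$; Stage-1 axis sharing pushes $R$ through by \eqref{eq:so3-equiv}, the $\ell_2$-over-axes step \eqref{eq:axis-sensor} annihilates it, everything downstream is untouched, so $z(Rx)=z(x)$. A gain $\lambda$ leaves $X_{\text{axes}}$ fixed off the floor (Lemma~\ref{lem:norm-robust-floor}) and sends $X_{\log}\mapsto X_{\log}+\operatorname{Rep}_T(\log\lambda_{\mathrm{ACC}},\log\lambda_{\mathrm{GYR}})$; since Stage-1 and Stage-2 read only $X_{\text{axes}}$, the $g^{(d)}$ block of $z$ is unchanged while the last two coordinates shift by $(\log\lambda_{\mathrm{ACC}},\log\lambda_{\mathrm{GYR}})$, i.e., $z$ is affine in $\log\Lam\cong\R^2$. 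Combining the three cases gives $\CT\times\mathrm{O}(3)$-invariance of $z$ and affineness in $\log\Lam$.

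I expect the main obstacle to be the normalization-layer bookkeeping — making precise that $\mathcal N_s$ and the per-sequence GroupNorm commute with cyclic shifts and do not smuggle rotation dependence back in. This comes down to the observation that their statistics are built from $R_s$ and from channelwise time-moments of the already-$\mathrm{O}(3)$-invariant tensor $S$, all of which are simultaneously $\CT$-invariant and $\mathrm{O}(3)$-invariant, so each such layer is a channelwise affine map with symmetry-inert coefficients; one must also flag the $\varepsilon$-floor caveat inherited from \eqref{eq:gain-equiv}, and (as in the footnote to \eqref{eq:smooth-gap}) that the box smoother, being a single depthwise circular kernel, commutes with the sensor mean and with $\GAP_t$. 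Once that is in hand, the remainder is just the routine equivariance of circular convolution and the isometry property of $\mathrm{O}(3)$.
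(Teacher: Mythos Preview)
Your proposal is correct and follows essentially the same route as the paper: $\mathrm{O}(3)$-invariance from axis-shared $K(Rx)=R\,K(x)$ plus the isometry of $R$, $\CT$-invariance from the commutation of circular convolution with cyclic shifts together with $\GAP_t$ averaging a full period, and the gain identities from the degree-two homogeneity $R_s(\lambda_s x)=\lambda_s^2 R_s(x)$ and Lemma~\ref{lem:norm-robust-floor}. Your treatment of the ``Consequently'' part is in fact more thorough than the paper's---you push the symmetries through Stage-2, GroupNorm, and sensor fusion, whereas the paper's proof of this proposition treats only the simplified pipeline $\tilde y\to\mathrm{Box}^{\circlearrowleft}\to\GAP_t$ and defers GroupNorm and the full linear core to Lemma~\ref{lem:GN-CT} and Proposition~\ref{prop:naturality}; this extra scope is correct but slightly exceeds what the proposition as stated requires.
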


\begin{proof}
\emph{O(3) invariance.}
Let $x\in\R^{3\times T}$ be a tri-axial stream.
Since Stage--1 uses the \emph{same} temporal operator on each axis, the axiswise
convolution $K$ can be written as
\[
K \;=\; I_3\otimes T_k,
\]
where $T_k$ is the circulant (circular) convolution operator on $\R^{T}$ with kernel $k$.
For any $R\in\mathrm{O}(3)$ acting on the axis dimension we have
\[
K(Rx) \;=\; (I_3\otimes T_k)\,(R\otimes I_T)x
\;=\; (R\otimes I_T)\,(I_3\otimes T_k)x
\;=\; R\,K(x),
\]
i.e.\ $K$ is $\mathrm{O}(3)$–equivariant.
Taking the $\ell_2$ norm across axes and then a scalar nonlinearity $\psi$ yields
\[
\tilde y_c(t)
=\psi\!\left(\,\big\|(Kx)_{\cdot c}(t)\big\|_2\right)
=\psi\!\left(\,\big\|R (Kx)_{\cdot c}(t)\big\|_2\right)
=\tilde y_c(t;Rx),
\]
since $\|Rv\|_2=\|v\|_2$ for all $R\in\mathrm{O}(3)$.
Thus $\tilde y(Rx)=\tilde y(x)$.

\smallskip
\emph{$C_T$ invariance after $\GAP_t$.}
Let $\tau_\Delta$ be the cyclic time-shift by $\tau\in C_T$ and let $\Pi_\tau$ be its
$T\times T$ permutation matrix.
Circular convolution commutes with cyclic shifts, i.e.\ $T_k \Pi_\tau=\Pi_\tau T_k$,
whence $K(\tau_\Delta x)=\tau_\Delta K(x)$.
Because the axis norm and $\psi$ act pointwise in time,
$\tilde y(\tau_\Delta x)=\tau_\Delta \tilde y(x)$.
Finally, global average pooling over time is shift-invariant:
\[
\GAP_t(\tau_\Delta f)=\frac{1}{T}\sum_{t=1}^T f(t-\tau)=\frac{1}{T}\sum_{t=1}^T f(t)=\GAP_t(f).
\]
Therefore $\GAP_t(\tau_\Delta\!\circ\!\tilde y)=\GAP_t(\tilde y)$.

\smallskip
\emph{Gain behavior.}
With $R_s,\,\rho^{\mathrm{norm}}_s,\,\mathcal N_s,\,r_s$ as in \eqref{eq:rms}, scaling by $\lambda_s>0$
gives $R_s(\lambda_s x)=\lambda_s^2 R_s(x)$ and hence
\[
\mathcal N_s(\lambda_s x)=\frac{\lambda_s x}{\max\{\varepsilon,\ \lambda_s\sqrt{R_s(x)}\}}
\quad\text{and}\quad
r_s(\lambda_s x)=\tfrac{1}{2}\log\big(\lambda_s^2 R_s(x)\big)=r_s(x)+\log\lambda_s.
\]
Thus \eqref{eq:gain-equiv} holds: $\mathcal N_s(\lambda_s x)=\mathcal N_s(x)$ whenever
$\sqrt{R_s(x)}\ge\varepsilon$ and $\lambda_s\sqrt{R_s(x)}\ge\varepsilon$, and
$r_s(\lambda_s x)=r_s(x)+\log\lambda_s$ whenever $R_s(x)>0$.

\smallskip
\emph{Consequent property of the head descriptor.}
Depthwise circular smoothing is another circular convolution, hence it commutes with
$C_T$ shifts and acts independently of axes; applying it after the axis-$\ell_2$ step
preserves the established $\mathrm{O}(3)$ invariance of $\tilde y$.
Therefore $\GAP_t$ of the smoothed $\tilde y$ is invariant to $C_T\times \mathrm{O}(3)$.
The appended $\log$RMS channels are constant in time (hence $\CT$–invariant) and satisfy
$r_s(\lambda_s x)=r_s(x)+\log\lambda_s$, so the overall head descriptor
$z$ is invariant to $C_T\times \mathrm{O}(3)$ and is \emph{affine in} $\log\Lambda\simeq\R^2$
along the $\log$RMS coordinates.
\end{proof}

As we have shown above, the head descriptor $z$ is invariant to $CT\times O(3)$ and affine in $\log\Lambda\simeq\mathbb{R}^2$ along the logRMS coordinates. In particular, for any linear classifier $(W_{\mathrm{head}},b)$,
\[
\ell(x):=W_{\mathrm{head}}\,z(x)+b
\]
is constant on $CT\times O(3)$ orbits, and satisfies
\[
\ell(\lambda\odot x)=\ell(x)+W_{\mathrm{head}}\,E_{\log}\,\log\lambda,
\]
whenever $R_s(x)>0$ (with $E_{\log}$ selecting the two logRMS coordinates and the normalization floor inactive for $N_s$).

We use GroupNorm with groups \(=2\) (ACC, GYR) as a per‑sequence normalization.\footnote{We employ GroupNorm~\cite{WuHe2018GN} to stabilize optimization; it commutes with cyclic time permutations and respects the grouped channel structure. BatchNorm~\cite{IoffeSzegedy2015BN} is not used in our equivariance guarantees (cf.\ \cite{Ba2016LN}.}
\begin{lemma}[GN–\(\CT\) commutation]\label{lem:GN-CT}
Let \(\mathrm{GN}\) compute per‑sample, per‑group means/variances over the Cartesian product of the group’s channels and time indices. For any cyclic permutation \(\pi\) of time indices,
\[
\mathrm{GN}(x\circ\pi)\;=\;\mathrm{GN}(x)\circ\pi.
\]
\end{lemma}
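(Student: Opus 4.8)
The plan is to reduce the statement to one elementary fact: a cyclic time shift acts on the index set $\{1,\dots,T\}$ by a bijection, and GroupNorm touches the time axis only through sums over that set; hence the statistics it subtracts and divides by are unchanged, and the remaining operation is applied index-by-index and so simply follows the permutation.

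First I would fix notation. Write the input as $x_{g,c,t}$, where $g$ ranges over the groups (here $g\in\{\mathrm{ACC},\mathrm{GYR}\}$), $c$ over the channels belonging to group $g$, and $t\in\{1,\dots,T\}$ over time, and let $\mathrm{GN}$ act by
\[
\mathrm{GN}(x)_{g,c,t}\;=\;\gamma_{g,c}\,\frac{x_{g,c,t}-\mu_g(x)}{\sqrt{\sigma_g^2(x)+\varepsilon}}\;+\;\beta_{g,c},
\]
where $\mu_g(x)$ and $\sigma_g^2(x)$ are the mean and variance of the family $\{x_{g,c,t}: c\in C_g,\ 1\le t\le T\}$ over the Cartesian product of group‑$g$ channels and time indices, and the optional affine parameters $\gamma_{g,c},\beta_{g,c}$ do not depend on $t$. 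For a cyclic permutation $\pi$ of $\{1,\dots,T\}$ set $(x\circ\pi)_{g,c,t}=x_{g,c,\pi(t)}$.

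The key step is the invariance of the per‑group statistics. Since $\pi$ is a bijection of $\{1,\dots,T\}$, substituting $t'=\pi(t)$ in the defining sums leaves the summed multiset of values intact, so $\mu_g(x\circ\pi)=\mu_g(x)$ and, using this in turn, $\sigma_g^2(x\circ\pi)=\sigma_g^2(x)$; the fact that aggregation runs over $C_g\times\{1,\dots,T\}$ and $\pi$ permutes only the time factor is exactly what makes this go through. Plugging these equalities into the formula for $\mathrm{GN}$ gives
\[
\mathrm{GN}(x\circ\pi)_{g,c,t}\;=\;\gamma_{g,c}\,\frac{x_{g,c,\pi(t)}-\mu_g(x)}{\sqrt{\sigma_g^2(x)+\varepsilon}}+\beta_{g,c}\;=\;\mathrm{GN}(x)_{g,c,\pi(t)}\;=\;(\mathrm{GN}(x)\circ\pi)_{g,c,t},
\]
which is the claim.

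There is no genuine obstacle here; the only points worth flagging are (i) that the scale/shift parameters are per‑channel and time‑independent, so they commute with the time reindexing, and (ii) that nothing in the argument uses cyclicity — it holds verbatim for \emph{any} permutation of the time axis — so in particular $\mathrm{GN}$ is $\CT$‑equivariant. This is precisely what is needed so that inserting GroupNorm between the circular convolutions in \eqref{eq:gn} does not disturb the $\CT$‑naturality of the linear core.
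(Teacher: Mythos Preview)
Your argument is correct and is exactly the paper's approach spelled out in full: the per-group statistics are symmetric in the time-index multiset and hence unchanged by any (in particular cyclic) reindexing, while the affine renormalization acts pointwise in time. Your added observations that the affine parameters are time-independent and that the proof goes through for arbitrary permutations are valid refinements of the same idea.
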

\begin{proof} 
Per‑group means and variances are symmetric functions of the multiset of time indices; cyclic reindexing leaves them unchanged. The affine renormalization acts pointwise in time. 
\end{proof}

With readout invariances handled by the above proposition, it now remains to establish naturality on $P$ and on $C_3$ for the linear core.

\begin{proposition}[Naturality on \(P\) \texorpdfstring{$\Leftrightarrow$}{<=>} no cross-sensor mixing]\label{prop:P-naturality}
Let \(i_s: s\to \mathrm{TOTAL}\) be the canonical inclusions in \(P\) for \(s\in\{\mathrm{ACC},\mathrm{GYR}\}\).
Decompose \(V_{\mathrm{TOTAL}}=V_{\mathrm{ACC}}\oplus V_{\mathrm{GYR}}\) and \(W_{\mathrm{TOTAL}}=W_{\mathrm{ACC}}\oplus W_{\mathrm{GYR}}\).
Given a linear core with components \(\eta_s:V_s\to W_s\) and \(\eta_{\mathrm{TOTAL}}:V_{\mathrm{TOTAL}}\to W_{\mathrm{TOTAL}}\), the following are equivalent:
\begin{enumerate}
\item For each \(s\), \(Y(i_s)\,\eta_s \;=\; \eta_{\mathrm{TOTAL}}\,X(i_s)\).
\item \(\eta_{\mathrm{TOTAL}}\) is block-diagonal in the sensor decomposition and it is equal to 
\(\mathrm{diag}(\eta_{\mathrm{ACC}},\eta_{\mathrm{GYR}})\) (i.e., there is \emph{no cross-sensor mixing}).
\end{enumerate}
\end{proposition}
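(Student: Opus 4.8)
The plan is to reduce the statement to elementary bookkeeping about direct sums, so almost all of the work is in unwinding definitions. First I would identify the maps $X(i_s)$ and $Y(i_s)$ explicitly: by the definition of the data and feature functors on morphisms (cf.~\eqref{eq:sensor-inj} and~\eqref{eq:X-general}), with the group component taken to be the identity, $X(i_{\mathrm{ACC}})$ and $Y(i_{\mathrm{ACC}})$ are the canonical summand inclusions $\iota^V_{\mathrm{ACC}}\colon V_{\mathrm{ACC}}\hookrightarrow V_{\mathrm{ACC}}\oplus V_{\mathrm{GYR}}$, $v\mapsto(v,0)$, and $\iota^W_{\mathrm{ACC}}\colon W_{\mathrm{ACC}}\hookrightarrow W_{\mathrm{ACC}}\oplus W_{\mathrm{GYR}}$, $w\mapsto(w,0)$, and likewise for $\mathrm{GYR}$ into the second summand (the time axis rides along trivially). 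The one structural fact I will use is that each such inclusion is \emph{split} by the corresponding coordinate projection $\pi^V_t,\pi^W_t$, i.e.~$\pi^V_t\,\iota^V_s=\delta_{st}\,\Id$ and $\pi^W_t\,\iota^W_s=\delta_{st}\,\Id$.

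For $(2)\Rightarrow(1)$ I would just substitute. Writing $\eta_{\mathrm{TOTAL}}=\mathrm{diag}(\eta_{\mathrm{ACC}},\eta_{\mathrm{GYR}})=\sum_{s}\iota^W_s\,\eta_s\,\pi^V_s$ and composing with $\iota^V_t$ on the right, the sum collapses via $\pi^V_s\iota^V_t=\delta_{st}\Id$ to $\iota^W_t\,\eta_t=Y(i_t)\,\eta_t$, which is exactly $\eta_{\mathrm{TOTAL}}\,X(i_t)$; so the naturality square commutes for each $t$.

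For the converse $(1)\Rightarrow(2)$ — the substantive direction — the key step is to read off the block decomposition of $\eta_{\mathrm{TOTAL}}$ by pre- and post-composing with the split mono/epi pairs. Define $\eta^{ts}_{\mathrm{TOTAL}}:=\pi^W_t\,\eta_{\mathrm{TOTAL}}\,\iota^V_s\colon V_s\to W_t$ for $s,t\in\{\mathrm{ACC},\mathrm{GYR}\}$; since $V_{\mathrm{TOTAL}}$ and $W_{\mathrm{TOTAL}}$ are the stated direct sums, these four maps determine $\eta_{\mathrm{TOTAL}}$ completely. Post-composing the hypothesis $\eta_{\mathrm{TOTAL}}\,\iota^V_s=\iota^W_s\,\eta_s$ with $\pi^W_t$ gives $\eta^{ts}_{\mathrm{TOTAL}}=\pi^W_t\,\iota^W_s\,\eta_s=\delta_{ts}\,\eta_s$. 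Hence $\eta^{ss}_{\mathrm{TOTAL}}=\eta_s$ and $\eta^{ts}_{\mathrm{TOTAL}}=0$ for $t\ne s$, which is exactly the assertion that $\eta_{\mathrm{TOTAL}}=\mathrm{diag}(\eta_{\mathrm{ACC}},\eta_{\mathrm{GYR}})$ with no cross-sensor mixing.

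I do not expect a genuine obstacle: once the functors' action on the poset arrows is pinned down as the summand inclusions, the argument is a one-line direct-sum computation, and the ``$\Leftrightarrow$'' is really just two readings of the same block relation. The only point requiring a little care is that identification step — keeping the channel-lifted inclusions of~\eqref{eq:sensor-inj} straight and checking they are split monos with the coordinate projections as retractions. I would close with a short remark that, combined with the depthwise (cross-axis-free) structure of Stage~1, the same reasoning applied to the axis inclusions $j_{s,\alpha}$ of~\eqref{eq:axis-incl} upgrades this to naturality along \emph{all} of $P$, since $P$ is generated by the arrows $j_{s,\alpha}$ and $i_s$.
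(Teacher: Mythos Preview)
Your proposal is correct and follows essentially the same argument as the paper: both directions amount to reading off the $2\times 2$ block decomposition of $\eta_{\mathrm{TOTAL}}$ by probing with the summand inclusions (the paper writes the blocks as $\begin{psmallmatrix}A&B\\C&D\end{psmallmatrix}$ and evaluates on $(x,0)$ and $(0,y)$, whereas you phrase the same computation via the biproduct retractions $\pi^W_t\iota^W_s=\delta_{ts}\Id$). Your closing remark about extending along the axis inclusions $j_{s,\alpha}$ to cover all of $P$ is a welcome observation, though not part of the proposition as stated.
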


\begin{proof}
(1)\(\Rightarrow\)(2): Write \(\eta_{\mathrm{TOTAL}}=\begin{psmallmatrix}A&B\\ C&D\end{psmallmatrix}\) relative to the decompositions.
For \(x\in V_{\mathrm{ACC}}\), naturality at \(i_{\mathrm{ACC}}\) gives
\((\eta_{\mathrm{ACC}}x,0) = \eta_{\mathrm{TOTAL}}(x,0) = (Ax,Cx)\),
hence \(A=\eta_{\mathrm{ACC}}\) and \(C=0\).
For \(y\in V_{\mathrm{GYR}}\), naturality at \(i_{\mathrm{GYR}}\) gives
\((0,\eta_{\mathrm{GYR}}y) \;=\; \eta_{\mathrm{TOTAL}}(0,y) \;=\; (By,Dy)\),
hence \(B=0\) and \(D=\eta_{\mathrm{GYR}}\).
Thus \(\eta_{\mathrm{TOTAL}}=\mathrm{diag}(\eta_{\mathrm{ACC}},\eta_{\mathrm{GYR}})\).

(2)\(\Rightarrow\)(1): If \(\eta_{\mathrm{TOTAL}}=\mathrm{diag}(\eta_{\mathrm{ACC}},\eta_{\mathrm{GYR}})\), then
\(\eta_{\mathrm{TOTAL}}(x,0)=(\eta_{\mathrm{ACC}}x,0)=Y(i_{\mathrm{ACC}})\eta_{\mathrm{ACC}}x\)
and similarly for \(\mathrm{GYR}\); closure under identities and composition yields the claim for all \(u\in P\).
\end{proof}

\begin{proposition}[Naturality of the linear core over \(\mathcal{C}_3\)]
\label{prop:naturality}
Assume: 
(i) all temporal convolutions in \eqref{eq:s1},\eqref{eq:s2} use circular padding; 
(ii) Stage–1 is depthwise in the axis index; 
(iii) the linear core is block–diagonal in the sensor decomposition.
Let \(\eta\) be obtained from \eqref{eq:forward} by replacing \(\varphi\) with \(\mathrm{Id}\) and omitting the nonlinear reductions \eqref{eq:axis-sensor}, GroupNorm \eqref{eq:gn}, the sensor fusion \eqref{eq:sensor-total}, the final \(\mathrm{GAPt}\) in \eqref{eq:smooth-gap}, and any nonlinear bypasses (e.g., a log‑RMS channel), while retaining depthwise circular box smoothing as a linear operator on the per‑sensor streams, i.e.
\[
\widehat{H}^{(d)}_2 \;:=\; \operatorname{Box}^{\circlearrowleft}_k\!\big(H^{(d)}_2\big).
\]
Equivalently, since \(\operatorname{Box}^{\circlearrowleft}_k\) acts on time only and the sensor mean in \eqref{eq:sensor-total} acts on the sensor index only, they commute; sliding \(\operatorname{Box}^{\circlearrowleft}_k\) across the mean leaves the network’s readout unchanged.
Then, for every morphism \((\tau,\lambda)\in CT\times\Lambda\) (with
\(\Lambda=\mathbb{R}_{>0}^{\{\mathrm{ACC},\mathrm{GYR}\}}\)) and every \(u\in P\) (realized by the
injections \eqref{eq:axis-incl}–\eqref{eq:sensor-inj}),
\[
  Y(\tau,\lambda,u)\,\eta \;=\; \eta\,X(\tau,\lambda,u),
\]
i.e. the linear core realizes a natural transformation \(\eta:X\Rightarrow Y\)
between functors \(X,Y:\mathcal{C}_3\to\Vect\).
\end{proposition}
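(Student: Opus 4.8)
The plan is to use that the product category $\mathcal{C}_3=\mathbf{B}(C_T\times\Lambda)\times P$ is generated, under composition, by two classes of morphisms: the pure time-shift/gain arrows $\big((\tau,\lambda),\mathrm{id}_s\big)$ at a fixed object, and the poset covering relations $\big((0,1),u\big)$ with $u\in\{\,j_{s,\alpha}:\mathrm{axis}_{s,\alpha}\to s,\ i_s:s\to\mathrm{TOTAL}\,\}$. Since $X$ and $Y$ are functors and the morphisms along which a fixed family $\eta$ is natural form a wide subcategory (naturality squares paste horizontally and contain all identities), it suffices to verify \eqref{eq:naturality} on these generators. A general arrow factors as $\big((\tau,\lambda),u\big)=\big((\tau,\lambda),\mathrm{id}_t\big)\circ\big((0,1),u\big)=\big((0,1),u\big)\circ\big((\tau,\lambda),\mathrm{id}_s\big)$, and the two factorizations are consistent by the already-recorded identity $X((\tau,\lambda),u)=J_u\,S^{(s)}_{(\tau,\lambda)}=S^{(t)}_{(\tau,\lambda)}\,J_u$ (and its $Y$-analogue), so no ambiguity arises.

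\textbf{Time and gain generators.} For $\big((\tau,\lambda),\mathrm{id}_s\big)$ the naturality law reads $\eta_{(\star,s)}\,S^{(s)}_{(\tau,\lambda)}=S^{(s)}_{(\tau,\lambda)}\,\eta_{(\star,s)}$ (with $S$ on the $Y$-side for the left factor), where by \eqref{eq:data-monoid} $S^{(s)}_{(\tau,\lambda)}=\rho_s(\lambda)\circ\tau_\Delta$. I would split this into: (a) commutation with $\tau_\Delta$, which holds because each layer of the linear core is a circular temporal convolution — the Stage-1 depthwise conv, the Stage-2 grouped dilated convs, and depthwise box smoothing — and circulant operators commute with cyclic shifts by the appendix identity $\tau_\Delta(x\convstar k)=(\tau_\Delta x)\convstar k$, hence so does their composite; and (b) commutation with $\rho_s(\lambda)$, which multiplies every channel of a given sensor block by a single positive scalar and therefore pulls through any layer that does not mix channels across sensor blocks — guaranteed by the depthwise-in-axis hypothesis (ii) and the block-diagonal-in-sensor hypothesis (iii). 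Because the time action $\tau_\Delta$ and the channel action $\rho_s$ commute, their composite commutes with $\eta_{(\star,s)}$, which is exactly \eqref{eq:naturality} here since $J_{\mathrm{id}}=\Id$.

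\textbf{Poset generators.} The square for $i_s:s\to\mathrm{TOTAL}$ is precisely the content of Proposition~\ref{prop:P-naturality}: naturality along $i_s$ is equivalent to $\eta_{\mathrm{TOTAL}}=\mathrm{diag}(\eta_{\mathrm{ACC}},\eta_{\mathrm{GYR}})$, which is hypothesis (iii). The square for $j_{s,\alpha}:\mathrm{axis}_{s,\alpha}\to s$ is the same statement with the three-axis decomposition in place of the two-sensor decomposition: naturality along $j_{s,\alpha}$ amounts to $\eta_s$ being block-diagonal over the axes with the \emph{same} block on each, which is exactly the explicit axis-tying of Stage-1 (hypothesis (ii)). Pasting axis-into-sensor with sensor-into-TOTAL squares then realizes every $u\in P$, each being a composite $\mathrm{axis}\prec\mathrm{sensor}\prec\mathrm{TOTAL}$ or a sub-segment thereof. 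Combining the two generator cases via the reduction above yields $Y(\tau,\lambda,u)\,\eta=\eta\,X(\tau,\lambda,u)$ for all morphisms, i.e. $\eta:X\Rightarrow Y$ is natural.

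\textbf{Expected obstacle.} The genuinely new content over the cited building blocks (circulant–shift commutation, and the ``no cross-block mixing $\Leftrightarrow$ naturality'' equivalence) is modest; the delicate part is the bookkeeping that, after replacing $\varphi$ by $\Id$ and deleting the nonlinear pieces (the axis $\ell_2$ reduction, GroupNorm, the sensor mean, the final $\GAP_t$, the $\log$RMS bypass) while retaining $\mathrm{Box}^{\circlearrowleft}_k$ as a linear time operator on the per-sensor streams, what remains really is a composite of arrows of only the two handled types — in particular that the multi-dilation branches assemble as a direct sum compatible with the blockwise definition of $S^{(\cdot)}$, and, crucially, that $\eta_s$ and $\eta_t$ for $s\prec t$ coincide as ``the one tied network read on different channel groups,'' which is what makes the poset squares commute across the different network depths attached to $s$ and $t$. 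I expect this layer-by-layer compatibility check — rather than any single algebraic identity — to be the main effort.
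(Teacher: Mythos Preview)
Your proposal is correct and follows essentially the same approach as the paper: reduce to generators of $\mathcal{C}_3$, verify the time-shift case via circulant--shift commutation, the gain case via sensor-block-diagonality, and the poset case via axis- and sensor-block-diagonality (you invoke Proposition~\ref{prop:P-naturality} where the paper repeats the short direct computation), then close under composition and direct sums. The only cosmetic difference is that you make the wide-subcategory/pasting argument explicit up front, whereas the paper states the closure step at the end; your ``expected obstacle'' about the multi-dilation branches assembling as a direct sum is exactly what the paper handles in its one-line opening description of $\eta$.
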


\begin{proof}
\(\eta\) is a composition (and a direct sum across the multi-dilation branches) of linear maps of the following kinds:
(i) circular 1‑D convolutions in time (Stages \eqref{eq:s1}, \eqref{eq:s2} with \(\varphi=\mathrm{Id}\));
(ii) the canonical injections along \(P\) (\eqref{eq:axis-incl}–\eqref{eq:sensor-inj});
(iii) depthwise circular box smoothing applied to \(H^{(d)}_2\), i.e. \(H^{(d)}_2\mapsto \widehat{H}^{(d)}_2=\operatorname{Box}^{\circlearrowleft}_k(H^{(d)}_2)\).
We verify naturality on generators.

\emph{Time shifts \(\tau\in CT\).}
For any kernel \(k\), let \(C_k\) denote the circular convolution operator. Then
\(C_k\,\tau_\Delta=\tau_\Delta\,C_k\).
The box smoother \(\operatorname{Box}^{\circlearrowleft}_k\) is also a circular convolution, hence
\(\operatorname{Box}^{\circlearrowleft}_k\,\tau_\Delta=\tau_\Delta\,\operatorname{Box}^{\circlearrowleft}_k\).
Therefore every convolutional block in \(\eta\) (and any subsequent linear wiring) commutes with the \(CT\)–action:
\(Y(\tau,1)\eta=\eta\,X(\tau,1)\).

\emph{Gains \(\lambda\in\Lambda\).}
Write \(x=(x_{\mathrm{ACC}},x_{\mathrm{GYR}})\) and \(\lambda\odot x=(\lambda_{\mathrm{ACC}}x_{\mathrm{ACC}},\lambda_{\mathrm{GYR}}x_{\mathrm{GYR}})\).
Assumption (iii) gives each constituent \(L\) of \(\eta\) as
\(L=\mathrm{diag}(L_{\mathrm{ACC}},L_{\mathrm{GYR}})\), hence
\[
L(\lambda\odot x)=\big(\lambda_{\mathrm{ACC}}L_{\mathrm{ACC}}x_{\mathrm{ACC}},\ \lambda_{\mathrm{GYR}}L_{\mathrm{GYR}}x_{\mathrm{GYR}}\big)
=\lambda\odot L(x),
\]
so \(Y(1,\lambda)\eta=\eta\,X(1,\lambda)\).

\emph{Poset arrows \(u\in P\).}
The realized \(P\)–arrows are the canonical injections
\eqref{eq:axis-incl}–\eqref{eq:sensor-inj}. 
By (ii), Stage–1 blocks are \(\mathrm{diag}(L_x,L_y,L_z)\) (no cross–axis mixing), hence
\(\mathrm{diag}(L_x,L_y,L_z)\,j_{s,\alpha}=j_{s,\alpha}\,L_\alpha\).
By (iii), Stage–2 and the smoother are block–diagonal across sensors, so for
\(u:\mathrm{ACC}\to\mathrm{TOTAL}\) with \(i_{\mathrm{ACC}}:\mathbb{R}^{3\times T}\to\mathbb{R}^{6\times T}\),
\[
\begin{aligned}
\eta_{\mathrm{TOTAL}}\,X(u)(x)
  &= \mathrm{diag}(L_{\mathrm{ACC}},L_{\mathrm{GYR}})\,(x,0)\\
  &= \big(L_{\mathrm{ACC}}x,0\big)
   = i_{\mathrm{ACC}}\!\big(L_{\mathrm{ACC}}x\big)
   = Y(u)\,\eta_{\mathrm{ACC}}(x),
\end{aligned}
\]
and similarly for the GYR branch and for axis–to–sensor inclusions.

Since naturality is preserved under composition and direct sums, we obtain
\(Y(\tau,\lambda,u)\eta=\eta\,X(\tau,\lambda,u)\) for all \((\tau,\lambda,u)\).
\end{proof}

Collecting the above results, the linear core of \textsc{CatEquiv} realizes a natural transformation $\eta:X\Rightarrow Y$ over the symmetry category $C_3=B(C_T\times\Lambda)\times P$. By Proposition~2, naturality on $P$ is equivalent to the absence of cross–sensor mixing, and by Proposition~3, the entire core (compositions and direct sums of circular temporal convolutions, canonical injections, and depthwise circular smoothing) commutes with the $C_T\times\Lambda$ action. Proposition~1 then yields the readout guarantees: axis sharing followed by the axis $\ell_2$ reduction and a scalar nonlinearity gives $O(3)$ invariance; global time averaging gives $C_T$ invariance; and the appended log–RMS coordinates are affine in $\log\Lambda$. GroupNorm, computed per sequence and per sensor group, commutes with cyclic reindexings (Lemma~2) and therefore does not disturb these properties. Deviations from exact gain invariance arise only when the RMS floor is active, in which case they are explicitly bounded (Lemma~1). Crucially, these statements are architectural (not dataset–contingent) and hold for any window length $T$, any positive gains $\Lambda$, and any choice of circular kernels. Thus the factorization $C_3=B(C_T\times\Lambda)\times P$ is not merely descriptive: it is the algebraic reason the network generalizes under joint time, gain, and rotation shifts, and it furnishes a template for extending the construction to larger thin posets and additional commuting group factors.

\end{document}